\documentclass{article}
\PassOptionsToPackage{numbers, sort&compress}{natbib}
\usepackage[preprint]{neurips_2026}

\usepackage[utf8]{inputenc}
\usepackage[T1]{fontenc}
\usepackage[colorlinks=true,citecolor=blue,linkcolor=blue,urlcolor=blue]{hyperref}
\usepackage{url}
\usepackage{booktabs}
\usepackage{amsfonts}
\usepackage{amssymb}
\usepackage{amsmath}
\usepackage{amsthm}
\usepackage{nicefrac}
\usepackage{microtype}
\usepackage[table]{xcolor}
\usepackage{graphicx}
\usepackage{algorithm}
\usepackage{algorithmic}
\usepackage{multirow}
\usepackage{subcaption}
\usepackage{wrapfig}
\usepackage{float}
\usepackage{caption}
\newcommand{\Hd}{\mathbb{H}^d}

\newcommand{\inner}[2]{\langle #1, #2 \rangle_{\mathcal{L}}}
\newcommand{\dL}{d_{\mathcal{L}}}
\newcommand{\expmap}{\mathrm{exp}}
\newcommand{\logmap}{\mathrm{log}}
\newcommand{\pt}{\mathrm{PT}}

\newcommand{\ours}{\textsc{HyperPose}}

\newtheorem{proposition}{Proposition}

\title{HYPERPOSE: Hyperbolic Kinematic Phase-Space Attention for 3D Human Pose Estimation}

\author{%
	\begin{tabular}{cc}
		Vinduja T. & Ashish M.\\
		{\normalfont\footnotesize vinduja.pcse25@diat.ac.in} &
		{\normalfont\footnotesize musaleashish2911@gmail.com}\\[4pt]
		Ajay Waghumbare & Upasna Singh\\
		{\normalfont\footnotesize waghumbareajay@gmail.com} &
		{\normalfont\footnotesize upasnasingh@diat.ac.in}
	\end{tabular}
}

\begin{document}
	\maketitle
	
	\begin{abstract}
		We introduce HYPERPOSE, a novel 3D human pose estimation framework that performs spatio-temporal reasoning entirely within the Lorentz model of hyperbolic space $\mathbb{H}^d$ to natively preserve the hierarchical tree topology of the human skeleton. Current state-of-the-art pose estimators aim to capture complex joint dynamics by relying on transformers and graph convolutional networks. Since these architectures operate exclusively in Euclidean space which fundamentally mismatches the inherent tree structure of the human body, these methods inevitably suffer from exponential volume distortion and struggle to maintain structural coherence. To this end, we depart from flat spaces and aim to improve geometric fidelity with Hyperbolic Kinematic Phase-Space Attention (HKPSA), natively embedding complex joint relationships without distortion, alongside a multi-scale windowed hyperbolic attention mechanism that efficiently models temporal dynamics in $\mathcal{O}(TW)$ complexity. Furthermore, to overcome the well-known instability of training non-Euclidean manifolds, HYPERPOSE introduces a novel Riemannian loss suite and an uncertainty-weighted curriculum, enforcing physical geodesic constraints like bone length and velocity consistency. Extensive evaluations on the Human3.6M and MPI-INF-3DHP datasets demonstrate that HYPERPOSE achieves state-of-the-art structural and temporal coherence, significantly reducing both volume distortion and velocity error, while establishing new state-of-the-art benchmarks in overall positional accuracy.
	\end{abstract}
\author{%
	\begin{tabular}{cc}
		Vinduja T. & Ashish M.\\
		{\normalfont\footnotesize vinduja.pcse25@diat.ac.in} &
		{\normalfont\footnotesize musaleashish2911@gmail.com}\\[4pt]
		Ajay Waghumbare & Upasna Singh\\
		{\normalfont\footnotesize waghumbareajay@gmail.com} &
		{\normalfont\footnotesize upasnasingh@diat.ac.in}
	\end{tabular}
}	
	\section{Introduction}
	\label{sec:intro}
	
	Estimating 3D human pose from monocular video is a cornerstone problem
	in computer vision with applications in autonomous driving, sports
	analytics, and clinical biomechanics~\cite{ionescu2014human36m}.
	The dominant \emph{2D-to-3D lifting} paradigm first detects 2D joint
	positions using an off-the-shelf
	detector~\cite{chen2018cpn} and then recovers 3D coordinates from the
	temporal sequence~\cite{martinez2017simple,pavllo2019videopose3d}.
	Transformer-based methods have driven mean per-joint position error
	(MPJPE) on Human3.6M below 40\,mm: PoseFormer~\cite{zheng2021poseformer},
	MixSTE~\cite{zhang2022mixste}, STCFormer~\cite{tang2023stcformer},
	MotionBERT~\cite{zhu2023motionbert}, MotionAGFormer~\cite{mehraban2024motionagformer},
	and efficiency-focused designs~\cite{li2024hot,peng2024ktpformer}
	represent successive advances.
	State-space architectures~\cite{huang2025posemamba} and hierarchical
	autoregressive transformers~\cite{zheng2025hipart} continue to push
	the frontier.
	
	\begin{figure}[t]
		\centering
		\includegraphics[width=\linewidth]{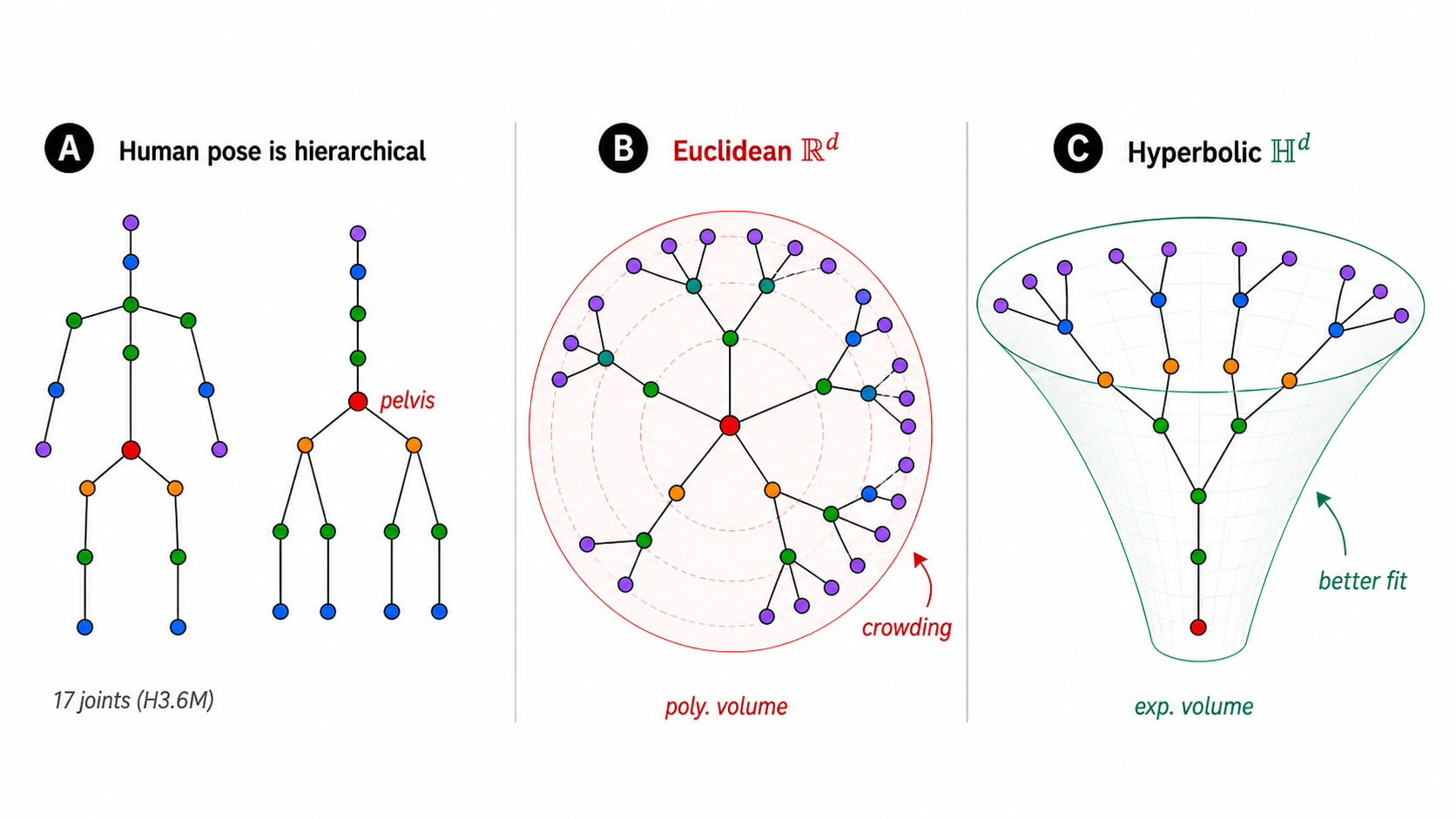}
		\caption{The human skeleton is a hierarchical kinematic tree rooted at the
			pelvis. Euclidean space has polynomial volume growth, crowding distal
			joints and distorting tree-like relationships. Hyperbolic space has
			exponential volume growth, naturally mirroring the branching factor
			and motivating the Lorentz representation in \ours{}.}
		\label{fig:motivation}
	\end{figure}
	
	\paragraph{The geometric mismatch.}
	Despite their empirical success, every method above operates in
	Euclidean space. The human skeleton is a kinematic tree, a
	hierarchical graph rooted at the pelvis branching through the spine,
	limbs, and extremities. It is established that Euclidean space cannot
	embed tree-structured data without distortion growing exponentially in
	tree depth~\cite{nickel2017poincare,nickel2018lorentz}. Hyperbolic
	space has volume that grows exponentially with radius, naturally
	mirroring the branching structure of trees. When the representation
	space mismatches the data geometry, the model wastes capacity undoing
	ambient distortion rather than capturing kinematic structure
	(Figure~\ref{fig:motivation}).
	
	\paragraph{Research gap.}
	Hyperbolic geometry has been applied to skeleton-based action
	\emph{recognition}~\cite{peng2022poincare,liu2025hyliformer} and
	Möbius transformations used for pose
	estimation~\cite{azizi2022mobius}, but neither line performs \emph{dense
		3D coordinate regression} on $\mathcal{H}^d$ with kinematic-tree-aware
	attention. We close this gap with \ours{}.
	
	\paragraph{Contributions.}
	\begin{enumerate}\itemsep2pt
		\item \textbf{Multi-head HKPSA.} A spatial attention mechanism on
		$\mathcal{H}^d$ with per-head logits combining
		\emph{(i)}~Lorentzian-proximity (monotone-equivalent to geodesic
		distance, eliminating numerically fragile \texttt{arccosh}),
		\emph{(ii)}~velocity-coherence penalty in the origin tangent
		space, and \emph{(iii)}~a multi-hop kinematic-tree bias
		$\sum_{k=1}^{3}\gamma_{k,h}A^k$ with learnable per-head weights.
		
		\item \textbf{Multi-scale windowed temporal attention.} $\mathcal{O}(TW)$
		per-joint cross-frame attention at windows $W\!\in\!\{3,9,27\}$,
		covering local, action-segment, and gait-cycle scales.
		
		\item \textbf{Confidence-gated tangent-flow architecture.} Hidden
		states propagate as tangent vectors at the origin; manifold
		representations are materialised only inside HKPSA, eliminating
		${\approx}6$ log/exp round-trips per forward pass. Per-joint
		embeddings are gated by CPN detection confidence.
		
		\item \textbf{Riemannian loss suite with curriculum.} Geodesic
		formulations of velocity consistency and bone-length constraints,
		combined via homoscedastic uncertainty
		weighting~\cite{kendall2018multitask} with a curriculum warm-up
		over the first 20 epochs.
	\end{enumerate}

	\section{Related Work}
	\label{sec:related}
	
	\paragraph{2D-to-3D lifting.}
	\citet{martinez2017simple} established a deep MLP baseline;
	\citet{pavllo2019videopose3d} introduced dilated temporal convolutions.
	Transformer methods followed: PoseFormer~\cite{zheng2021poseformer}
	with factorised spatial--temporal attention; MixSTE~\cite{zhang2022mixste}
	alternating spatial and temporal streams; STCFormer~\cite{tang2023stcformer}
	with criss-cross attention; MotionBERT~\cite{zhu2023motionbert} with
	masked-reconstruction pre-training; MotionAGFormer~\cite{mehraban2024motionagformer}
	fusing transformer and GCN streams. Recent work diversifies along
	efficiency~\cite{li2024hot,peng2024ktpformer},
	generative modelling~\cite{jiang2023finepose},
	occlusion robustness~\cite{sun2024repose},
	state-space architectures~\cite{huang2025posemamba}, and hierarchical
	autoregressive generation~\cite{zheng2025hipart}.
	\emph{All} embed joint features in Euclidean $\mathbb{R}^d$.
	
	\paragraph{Hyperbolic representation learning.}
	\citet{nickel2017poincare,nickel2018lorentz} showed hyperbolic space
	embeds trees with exponentially lower distortion.
	\citet{ganea2018hnn} derived hyperbolic layers via Möbius gyrovector
	operations; \citet{chen2022fullyhyperbolic} extended this to Lorentz
	linear layers — a direct precursor to HKPSA.
	\citet{chami2019hgcn} introduced hyperbolic graph convolution;
	\citet{yang2024hypformer} demonstrated a full hyperbolic Transformer.
	Despite this maturity, hyperbolic deep learning has not been applied
	to 3D pose regression.
	
	\paragraph{Non-Euclidean skeletal analysis.}
	\citet{azizi2022mobius} apply Möbius transformations in a GCN for
	pose estimation, but without embedding features in a Riemannian
	manifold. Hyperbolic action recognition via Poincaré
	embeddings~\cite{peng2022poincare} and
	HyLiFormer~\cite{liu2025hyliformer} operates on discrete labels, not
	continuous coordinates. Lagrangian and Hamiltonian methods on
	$\mathrm{SE}(3)$~\cite{bhattoo2022lgnn,duong2021hamiltonian} confirm
	that topology-aware non-Euclidean representations benefit articulated
	bodies — motivating \ours{}.

	\section{Preliminaries: The Lorentz Model}
	\label{sec:prelim}
	
	We operate on the Lorentz (hyperboloid) model of $d$-dimensional
	hyperbolic space with curvature $c=-1$. The Lorentzian inner product
	on $\mathbb{R}^{d+1}$ is:
	\begin{equation}
		\inner{x}{y} = -x_0 y_0 + \textstyle\sum_{i=1}^{d} x_i y_i,
		\label{eq:lorentz_inner}
	\end{equation}
	and the hyperboloid is
	$\Hd = \{x\in\mathbb{R}^{d+1}\mid\inner{x}{x}=-1,\,x_0>0\}$.
	The geodesic distance is
	$\dL(x,y)=\operatorname{arccosh}(-\inner{x}{y})$.
	At the origin $o=(1,0,\ldots,0)$, exponential and logarithmic maps
	reduce to allocation-free closed forms (Appendix~\ref{app:origin}).
	Parallel transport and manifold drift bounds are provided in
	Appendix~\ref{app:proofs}.
	\section{Method: \ours{}}
	\label{sec:method}
	
	Given a 2D keypoint sequence $\{\mathbf{p}_t\}_{t=1}^T$,
	$\mathbf{p}_t\in\mathbb{R}^{J\times3}$ (coordinates + CPN confidence),
	\ours{} predicts 3D poses $\{\hat{\mathbf{y}}_t\}_{t=1}^T$,
	$\hat{\mathbf{y}}_t\in\mathbb{R}^{J\times3}$, via four stages:
	\emph{(i)}~phase-space embedding into $T_o\mathcal{H}^d$,
	\emph{(ii)}~spatial HKPSA blocks,
	\emph{(iii)}~windowed temporal attention blocks, and
	\emph{(iv)}~a per-joint output head.
	Figure~\ref{fig:architecture} shows the full pipeline.
	Implementation and hyperparameter details are in
	Appendix~\ref{app:impl}.
	
	\begin{figure}[t]
		\centering
		\includegraphics[width=\linewidth]{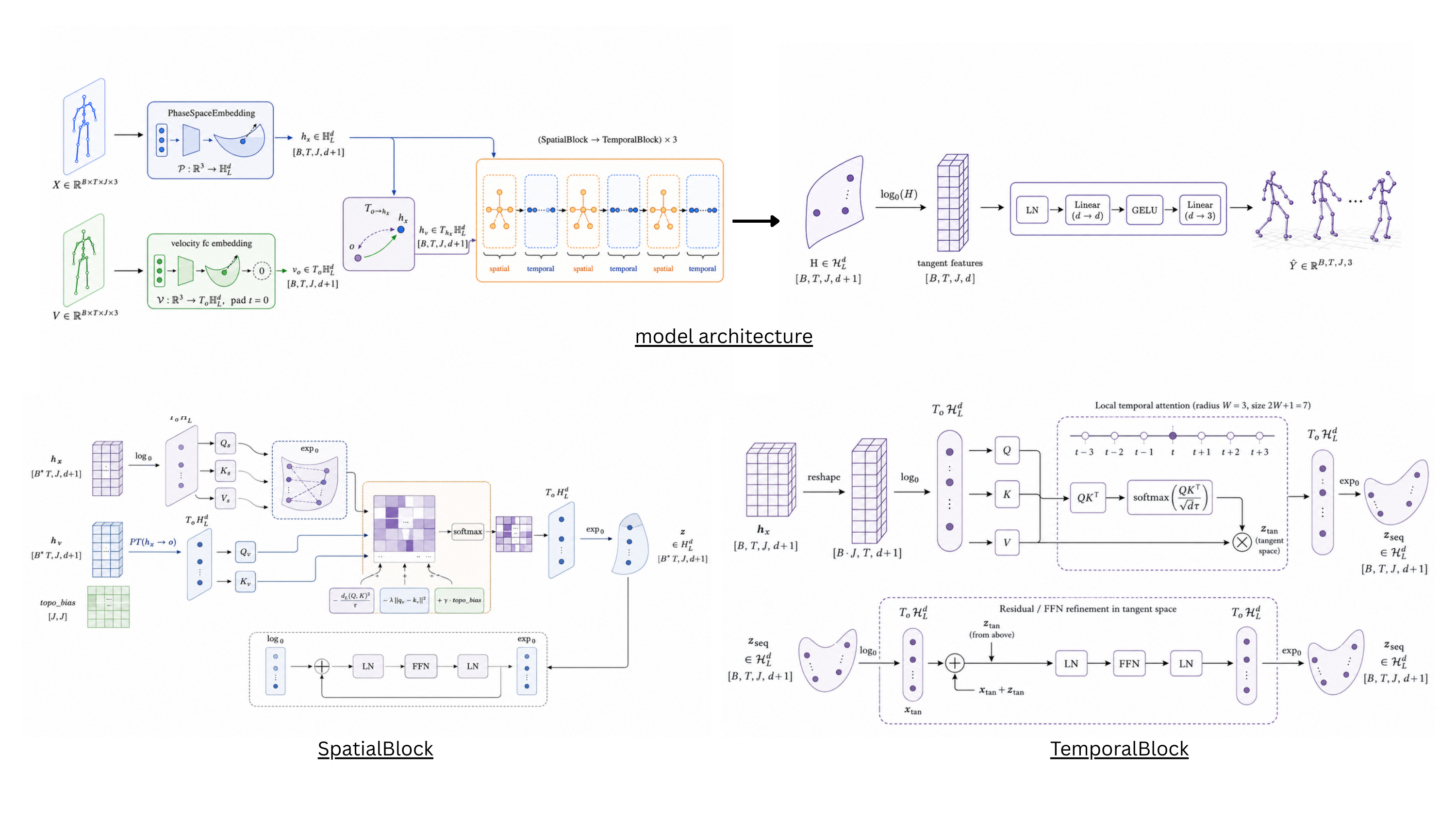}
		\caption{%
			\ours{} architecture. 2D keypoints are embedded into $\mathcal{H}^d$
			via confidence-gated phase-space embedding. Three interleaved
			HKPSA (spatial) and windowed (temporal, $W\!\in\!\{3,9,27\}$)
			attention blocks reason on the Lorentz manifold via a tangent-flow
			data path. A per-joint MLP decodes 3D coordinates. Dashed borders
			= tangent-space operations; solid blocks = manifold attention.}
		\label{fig:architecture}
	\end{figure}
	
	\subsection{Phase-Space Embedding}
	\label{sec:embedding}
	
	Each keypoint triple $\mathbf{p}=(x,y,c)$ is embedded via a
	\emph{confidence-gated} position branch and an independent velocity
	branch, together providing a phase-space (position + velocity)
	representation on $\mathcal{H}^d$.
	
	\paragraph{Position embedding.}
	Coordinates $(x,y)$ are projected by $W_p\in\mathbb{R}^{d\times2}$
	and gated by detection confidence:
	\begin{equation}
		\phi(\mathbf{p}) =
		\bigl(1+\tanh(\alpha c+\beta)\bigr)\cdot
		W_p\!\begin{pmatrix}x\\y\end{pmatrix},
		\label{eq:conf_gate}
	\end{equation}
	with $\alpha,\beta\in\mathbb{R}$ initialised to $(1,0)$. The gate
	$\in(0,2)$ attenuates occluded joints toward zero during training.
	The result is projected onto the hyperboloid:
	$\pi(\phi(\mathbf{p}))=\bigl(\sqrt{1+\|\phi\|^2},\,\phi\bigr)
	\in\mathcal{H}^d$,
	and immediately mapped to $T_o\mathcal{H}^d$ via $\logmap_o$.
	
	\paragraph{Velocity embedding.}
	Central finite differences
	$\Delta\mathbf{p}_t=(\mathbf{p}_{t+1}-\mathbf{p}_{t-1})/2$
	(forward/backward at boundaries) are projected through
	$W_v\in\mathbb{R}^{d\times2}$ on the $(x,y)$ channels only (the
	confidence difference carries no useful signal):
	\begin{equation}
		\mathbf{h}_v = W_v[\Delta\mathbf{p}_t]_{xy}
		\in T_o\mathcal{H}^d.
		\label{eq:vel_embed}
	\end{equation}
	
	\paragraph{Joint-identity embedding.}
	A learned per-joint signature $E_j\in\mathbb{R}^d$ is added to the
	tangent representation before the first spatial block, allowing
	early-layer filter specialisation per body part at a cost of
	$J\cdot d$ parameters.
	
	\subsection{Hyperbolic Kinematic Phase-Space Attention (HKPSA)}
	\label{sec:hkpsa}
	
	\ours{} uses $H=8$ attention heads of dimension $d_h=d/H$. A fused
	projection $W_{QKV}\in\mathbb{R}^{3d\times d}$ produces Q/K/V from
	both position and velocity branches.
	
	\paragraph{Lorentzian-proximity logit.}
	Tangent-space queries and keys are norm-bounded at $R_q=3$ and
	lifted to $\mathcal{H}^{d_h}$ via the closed-form origin map.
	The proximity logit follows~\citet{yang2024hypformer}:
	\begin{equation}
		s^{\text{prox}}_{ij}
		= \frac{1+\langle q_i,k_j\rangle_L}{\tau_h},
		\label{eq:prox}
	\end{equation}
	where $\tau_h>0$ is a per-head learnable temperature.
	Since $-\langle q,k\rangle_L=\cosh(\dL(q,k))\geq1$ is strictly
	monotone in geodesic distance, $s^{\text{prox}}$ is
	monotone-equivalent to $-d_L^2/\tau$ under softmax, eliminating
	the numerically fragile $\operatorname{arccosh}$ from bf16 training.
	The bound $R_q=3$ keeps $\cosh(R_q)\approx10$, maintaining
	softmax dynamic range; at $R_q=15$, $\cosh(15)\approx1.6\times10^6$
	saturates softmax to one-hot regardless of geometric proximity.
	
	\paragraph{Kinematic velocity-coherence logit.}
	With $v^{(q)}_i=W_Q\mathbf{h}_{v,i}$ and
	$v^{(k)}_j=W_K\mathbf{h}_{v,j}$ already at the origin:
	\begin{equation}
		s^{\text{kin}}_{ij}
		= -\lambda\,\|v^{(q)}_i - v^{(k)}_j\|^2,
		\label{eq:kin}
	\end{equation}
	computed in $\mathcal{O}(Nd)$ memory via
	$\|a-b\|^2=\|a\|^2+\|b\|^2-2\langle a,b\rangle$
	(Appendix~\ref{app:memory}).
	
	\paragraph{Multi-hop kinematic-tree bias.}
	Inspired by multi-hop GCN aggregation~\cite{zhao2019semgcn}:
	\begin{equation}
		s^{\text{topo}}_{ij}
		= \sum_{k=1}^{3}\gamma_{k,h}\,[A^k]_{ij},
		\label{eq:topo}
	\end{equation}
	where $\gamma_{k,h}$ are per-head learnable scalars ($\gamma_{1,h}=1$,
	$\gamma_{2,h}=\gamma_{3,h}=0$ at init), encoding parent, sibling,
	and cousin joint relationships. The combined logit and aggregation:
	\begin{equation}
		\alpha^{(h)}_{ij}=\operatorname{softmax}_j\!\bigl(
		s^{\text{prox},h}_{ij}+s^{\text{kin},h}_{ij}+s^{\text{topo},h}_{ij}
		\bigr),\quad
		z_i=\operatorname{concat}_h\!\Bigl(\sum_j\alpha^{(h)}_{ij}v^{(h)}_{s,j}\Bigr).
		\label{eq:attn}
	\end{equation}
	
	\subsection{Multi-Scale Windowed Temporal Attention}
	\label{sec:temporal}
	
	The temporal block operates entirely in $T_o\mathcal{H}^d$.
	Per-head logits are banded dot-products over a window of
	$\pm W$ frames:
	\begin{equation}
		\beta^{(h)}_{t,w}
		= \frac{q^{(h)\top}_{s,t}\,k^{(h)}_{s,t+w}}
		{\sqrt{d_h}\,\tau'_h},
		\quad w\in\{-W,\ldots,W\}.
		\label{eq:temp}
	\end{equation}
	We stack three temporal blocks with windows $W\in\{3,9,27\}$,
	covering local gait ($W=3$), short action segments ($W=9$,
	$\approx0.4$\,s at 50\,Hz), and full gait cycles ($W=27$,
	$\approx1.1$\,s). The effective receptive field is ${\approx}162$
	frames, comfortably covering the 243-frame input. Total temporal
	complexity: $\mathcal{O}(T\bar{W})$ with $\bar{W}=(3+9+27)/3=13$,
	a ${\approx}19{\times}$ reduction at $T=243$ vs.\ $\mathcal{O}(T^2)$.
	
	\subsection{Tangent-Flow Architecture and Output Head}
	\label{sec:arch}
	
	Hidden state is carried as a tangent vector
	$\mathbf{h}\in T_o\mathcal{H}^d$ of shape $[B,T,J,d]$ between blocks.
	Manifold representations are materialised only inside HKPSA,
	eliminating ${\approx}6$ log/exp round-trips per forward pass.
	Residual connections are standard vector addition in
	$T_o\mathcal{H}^d\cong\mathbb{R}^d$; pre-norm LayerNorm is
	applied before each sub-layer, following the standard pre-norm Transformer design.
	The output head decodes per-joint 3D coordinates:
	\begin{equation}
		\hat{\mathbf{y}}_{t,j}
		= W^{(j)}_2\operatorname{GELU}\!\bigl(
		W^{(j)}_1\operatorname{LN}(\mathbf{h}_{t,j})\bigr)
		\in\mathbb{R}^3.
		\label{eq:head}
	\end{equation}
	
	\subsection{Riemannian Loss Suite}
	\label{sec:losses}
	
	\paragraph{MPJPE loss.}
	$\mathcal{L}_{\text{mpjpe}}
	=(BTJ)^{-1}\sum_{b,t,j}\|\hat{\mathbf{y}}^{(b)}_{t,j}
	-\mathbf{y}^{(b)}_{t,j}\|$.
	
	\paragraph{Geodesic velocity consistency.}
	\begin{equation}
		\mathcal{L}_{\text{vel}}
		= \frac{1}{BJ(T-1)}\sum_{b,j,t}
		\bigl|d_L(\hat{\mathbf{y}}^{\Hd}_t,\hat{\mathbf{y}}^{\Hd}_{t+1})
		-d_L(\mathbf{y}^{\Hd}_t,\mathbf{y}^{\Hd}_{t+1})\bigr|.
		\label{eq:lvel}
	\end{equation}
	\begin{proposition}[Geodesic velocity consistency]
		\label{prop:vel}
		For unit-spaced frames, $\mathcal{L}_{\mathrm{vel}}=0$ if and only if every predicted joint has the same hyperbolic geodesic displacement between consecutive frames as the corresponding ground-truth joint.
	\end{proposition}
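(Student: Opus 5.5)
The proof is a direct argument about a finite sum of nonnegative terms. Write the summand as
\[
e_{b,j,t} := \bigl|d_L(\hat{\mathbf{y}}^{\Hd}_{t},\hat{\mathbf{y}}^{\Hd}_{t+1}) - d_L(\mathbf{y}^{\Hd}_{t},\mathbf{y}^{\Hd}_{t+1})\bigr| \ge 0,
\]
where the per-joint and per-sample indices $(b,j)$ on the embedded points are suppressed, as in \eqref{eq:lvel}. The normalising constant $1/(BJ(T-1))$ is strictly positive and finite whenever $T\ge 2$; this is the only place the unit-spaced-frame hypothesis enters. Unit spacing makes the consecutive-frame distance exactly the per-step geodesic displacement, with no time rescaling. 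Hence $\mathcal{L}_{\mathrm{vel}}$ is a positive multiple of $\sum_{b,j,t} e_{b,j,t}$.

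\emph{First, well-definedness.} Before manipulating the sum, I check that each $d_L$ term is a genuine real number. The lifted points $\hat{\mathbf{y}}^{\Hd}_t$ and $\mathbf{y}^{\Hd}_t$ lie on $\Hd$; I take them to be obtained via the projection $\pi$ of Section~\ref{sec:embedding} or $\expmap_o$. For $x,y\in\Hd$ the reverse Cauchy--Schwarz inequality for the Lorentzian form \eqref{eq:lorentz_inner} gives $-\inner{x}{y}\ge 1$. Therefore $d_L(x,y)=\operatorname{arccosh}(-\inner{x}{y})$ is a well-defined nonnegative real, and it is the geodesic distance of Section~\ref{sec:prelim}.

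\emph{Second, the ($\Leftarrow$) direction.} Suppose every predicted joint has the same geodesic displacement as the ground truth, meaning $d_L(\hat{\mathbf{y}}^{\Hd}_t,\hat{\mathbf{y}}^{\Hd}_{t+1})=d_L(\mathbf{y}^{\Hd}_t,\mathbf{y}^{\Hd}_{t+1})$ for all $b,j$ and $t=1,\dots,T-1$. Then every $e_{b,j,t}=0$, so $\mathcal{L}_{\mathrm{vel}}=0$.

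\emph{Third, the ($\Rightarrow$) direction.} Suppose $\mathcal{L}_{\mathrm{vel}}=0$. Then $\sum e_{b,j,t}=0$. A finite sum of nonnegative reals vanishes only if each term does, so $e_{b,j,t}=0$ for every index. This is exactly the stated equality of geodesic displacements.

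The only step I expect to need care is fixing what ``geodesic displacement'' means. I will define it as the scalar $d_L$ between consecutive lifted frames, not the tangent vector $\logmap_x(y)$. Equality of the vectors is strictly stronger and would not follow from $\mathcal{L}_{\mathrm{vel}}=0$. With the scalar definition the equivalence is immediate from nonnegativity. I will add a remark that the proposition constrains speeds, not directions, so $\mathcal{L}_{\mathrm{vel}}$ complements the MPJPE term rather than replacing it.
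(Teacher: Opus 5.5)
Your proposal is correct and follows essentially the same route as the paper. Both reduce the claim to the fact that a finite sum of nonnegative absolute-value terms vanishes if and only if every term does; your well-definedness check via $-\langle x,y\rangle \ge 1$ and your speed-versus-direction remark are harmless additions, and the latter mirrors the paper's ``necessary but not sufficient'' remark. One small correction: positivity of $1/(BJ(T-1))$ needs only $T\ge 2$, not unit spacing, since the unit-spacing hypothesis serves only to read the per-step distance $d_L$ as a geodesic velocity magnitude, as the paper does.
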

	The proof is provided in Appendix~\ref{app:proofs}.
	
	\paragraph{Geodesic bone-length constraint.}
	$\mathcal{L}_{\text{bone}}$ penalises geodesic bone-length deviation
	from ground truth over kinematic tree edges $\mathcal{E}$
	(Eq.~\ref{eq:bone_full}, Appendix~\ref{app:losses}).
	
	\paragraph{Uncertainty-weighted combination with curriculum.}
	Following~\citet{kendall2018multitask}:
	\begin{equation}
		\mathcal{L}_{\text{total}}
		= \frac{\mathcal{L}_{\text{mpjpe}}}{2\sigma^2_{\text{mpjpe}}}
		+ \sum_{k\in\{\text{vel,bone}\}}
		\frac{\omega(e)\,\mathcal{L}_k}{2\sigma^2_k}
		+ \tfrac{1}{2}\textstyle\sum_k\log\sigma^2_k,
		\label{eq:total}
	\end{equation}
	where $\omega(e)$ ramps from 0 (epochs 0--9) to 1 (epoch $\geq$20)
	linearly, suppressing noisy Riemannian gradients during early training.

	\section{Experiments}
	\label{sec:experiments}
	
	We evaluate \ours{} on Human3.6M~\cite{ionescu2014human36m} and MPI-INF-3DHP, reporting standard pose accuracy metrics alongside geometry-aware diagnostics that validate our hyperbolic design. Full dataset, metric, and implementation details are provided in Appendix~\ref{app:impl}.
	
	\subsection{Comparison with State-of-the-Art}
	\label{sec:sota}
	
	Table~\ref{tab:sota} compares \ours{} with recent state-of-the-art methods on Human3.6M using CPN-detected 2D keypoints.
	
	\begin{table}[htbp]
		\caption{Comparison with state-of-the-art on Human3.6M (mm, lower is better). All methods use CPN-detected 2D keypoints. $T$ = temporal receptive field; \textit{Geometry} = embedding space of hidden representations.}
		\label{tab:sota}
		\centering
		\small
		\setlength{\tabcolsep}{4pt}
		\begin{tabular}{lcccccc}
			\toprule
			Method & Geo. & $T$ & MPJPE$\downarrow$ & P-MPJPE$\downarrow$ & N-MPJPE$\downarrow$ \\
			\midrule
			Martinez~\citep{martinez2017simple}               & Euc. &   1 & 62.9 & 47.7 & ---   \\
			SemGCN~\citep{zhao2019semgcn}                     & Euc. &   1 & 43.8 & ---  & ---   \\
			PoseFormer~\citep{zheng2021poseformer}             & Euc. &  81 & 44.3 & 34.6 & ---   \\
			MHFormer~\citep{li2022mhformer}                   & Euc. & 351 & 43.0 & ---  & ---   \\
			MixSTE~\citep{zhang2022mixste}                    & Euc. & 243 & 40.9 & 32.6 & ---   \\
			STCFormer~\citep{tang2023stcformer}               & Euc. & 243 & 40.5 & 31.8 & ---   \\
			MotionBERT~\citep{zhu2023motionbert}              & Euc. & 243 & 39.2 & 32.9 & 39.00 \\
			MotionAGFormer~\citep{mehraban2024motionagformer} & Euc. & 243 & 38.4 & 32.4 & 38.16 \\
			\midrule
			\textbf{\ours{} (Ours)} & $\mathbb{H}^d$ & \textbf{243}
			& \textbf{36.0} & \textbf{29.11} & \textbf{35.08} \\
			\bottomrule
		\end{tabular}
	\end{table}

	\ours{} surpasses the prior state of the art (MotionAGFormer, 38.4\,mm) on all three protocols at the same temporal receptive field $T=243$ and a comparable parameter budget (17.6M vs.\ 19.2M). The improvement is most pronounced on N-MPJPE ($35.08$ vs.\ $38.16$--$39.00$\,mm), which measures scale-normalised structural accuracy and is most sensitive to the quality of the geometric representation. Two factors are decisive: (i) the Lorentzian-proximity logit is well-conditioned under bf16 — the bound $R_q=3$ keeps $\cosh(R_q)\approx 10$ and preserves softmax dynamic range, whereas the prior $\operatorname{arccosh}$-based logit was numerically saturated; (ii) the multi-scale temporal stack covers gait-cycle context that uniform $W{=}3$ does not.

	\subsection{Per-action results.}

	\begin{table*}[htbp]
		\caption{Per-action results of \ours{} on Human3.6M. MPJPE and P-MPJPE in mm; Accel in mm/frame$^2$. Full geometric diagnostics (MPJVE, BLC, Distortion, MAP, Entropy) are in Appendix~\ref{app:per_action}.}
		\label{tab:per_action_h36m_main}
		\centering
		\small
		\resizebox{\textwidth}{!}{%
			\begin{tabular}{lccccccccccccccc>{\columncolor[gray]{0.95}}c}
				\toprule
				Metric & Dir. & Disc. & Eat & Greet & Phone & Photo & Pose & Purch. & Sit & SitD. & Smoke & Wait & Walk & WalkD & WalkT & \textbf{Avg} \\
				\midrule
				MPJPE$\downarrow$   & 30.33 & 33.70 & 32.91 & 32.44 & 35.68 & 46.31 & 33.13 & 33.05 & 45.41 & 53.88 & 36.64 & 33.91 & 25.63 & 37.07 & 26.39 & \textbf{36.00} \\
				P-MPJPE$\downarrow$ & 25.18 & 28.00 & 26.74 & 26.65 & 28.95 & 34.30 & 26.02 & 26.41 & 37.35 & 45.60 & 30.54 & 26.32 & 21.03 & 30.11 & 22.03 & \textbf{29.11} \\
				Accel$\downarrow$   &  1.13 &  1.45 &  1.22 &  1.49 &  1.31 &  1.70 &  1.28 &  1.61 &  1.31 &  1.88 &  1.24 &  1.14 &  1.73 &  1.93 &  1.47 & \textbf{1.47}  \\
				\bottomrule
		\end{tabular}}
	\end{table*}
	
	\begin{figure*}[htbp]
		\centering
		\includegraphics[width=\linewidth]{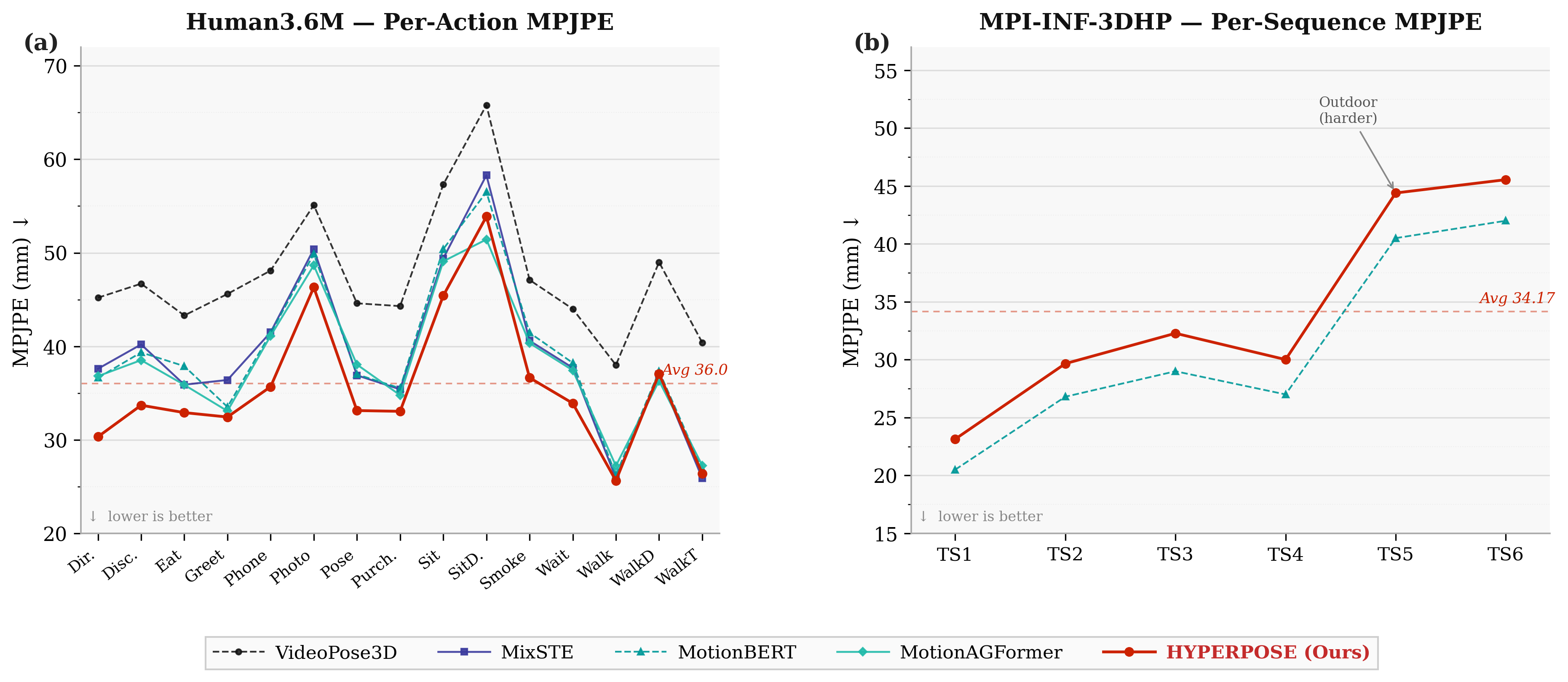} 
		\caption{%
			\textbf{Quantitative Results.} 
			\textbf{(a)} Per-action MPJPE on Human3.6M, where our method (red) achieves the new state-of-the-art average error of 36.0\,mm. 
			\textbf{(b)} Per-sequence MPJPE on MPI-INF-3DHP, demonstrating robust performance across both standard indoor poses and challenging outdoor scenes (TS5--TS6).
		}
		\label{fig:quantitative_graphs}
	\end{figure*}
	
	Figure~\ref{fig:quantitative_graphs} illustrates our model's robust performance across two datasets. On Human3.6M (Figure~\ref{fig:quantitative_graphs}a), our method outperforms prior state-of-the-art approaches across most action categories, achieving a new lowest average error of 36.0,mm. On the MPI-INF-3DHP dataset (Figure~\ref{fig:quantitative_graphs}b), the sequence breakdown shows that while outdoor scenes (TS5 and TS6) are naturally more challenging, our model maintains stable predictions, resulting in an overall average of 34.17,mm. 
	\subsection{MPI-INF-3DHP results.}

	\begin{table*}[!t]
		\caption{Per-sequence results of \ours{} on MPI-INF-3DHP. MPJPE, P-MPJPE, N-MPJPE in mm; MPJVE in mm/frame; Accel in mm/frame$^2$; BLC (bone-length consistency) in mm. $\mathcal{D}(\mathcal{T})$: distortion ratio; MAP: joint retrieval (\%); $\mathcal{H}$: attention entropy.}
		\label{tab:per_sequence_mpi3dhp}
		\centering
		\small
		\resizebox{\textwidth}{!}{%
			\begin{tabular}{lccccccccc}
				\toprule
				Seq & MPJPE$\downarrow$ & P-MPJPE$\downarrow$ & N-MPJPE$\downarrow$ & MPJVE$\downarrow$ & Accel$\downarrow$ & BLC$\downarrow$ & $\mathcal{D}(\mathcal{T})$ & MAP$\uparrow$ & $\mathcal{H}\downarrow$ \\
				\midrule
				TS1 & 23.14 & 13.51 & 22.86 & 3.51 & 2.48 & 21.92 & 9.98 & 54.80\% & 1.90 \\
				TS2 & 29.66 & 21.30 & 28.51 & 6.84 & 5.44 & 29.25 & 10.02 & 54.06\% & 1.84 \\
				TS3 & 32.28 & 25.39 & 31.84 & 5.73 & 4.67 & 29.06 &  9.96 & 53.35\% & 1.93 \\
				TS4 & 30.01 & 26.82 & 29.51 & 6.94 & 5.68 & 30.99 &  9.97 & 50.48\% & 1.92 \\
				TS5 & 44.40 & 35.31 & 41.40 & 5.42 & 4.67 & 28.58 &  9.97 & 59.30\% & 1.99 \\
				TS6 & 45.55 & 31.64 & 42.09 & 5.79 & 6.76 & 18.38 &  9.98 & 50.75\% & 1.94 \\
				\midrule
				\textbf{AVG} & \textbf{34.17} & \textbf{25.66} & \textbf{32.70} & \textbf{5.70} & \textbf{4.95} & \textbf{26.36} & \textbf{9.98} & \textbf{53.79\%} & \textbf{1.92} \\
				\bottomrule
		\end{tabular}}
	\end{table*}
	
	As detailed in Table~\ref{tab:per_sequence_mpi3dhp}, our proposed method demonstrates robust 3D pose estimation across the MPI-INF-3DHP dataset, achieving an average MPJPE of 34.17 mm alongside strong kinematic and geometric consistency. Notably, the breakdown reveals that while the model maintains exceptional accuracy on standard indoor poses (TS1 through TS4), the error naturally increases on the notoriously challenging outdoor sequences (TS5 and TS6) without compromising overall structural stability.
	
	\vspace{-1.5em}
	
	\subsection{Cross-dataset evaluation.}
	
	\vspace{-0.70cm}
	As detailed in the provided table Table~\ref{tab:cross_dataset}, the cross-dataset evaluation assesses the proposed method's transferability between the H36M and 3DHP datasets using a 14-joint, pelvis-centred protocol. The results demonstrate that the model achieves better generalization when trained on 3DHP and tested on H36M, yielding significantly lower error rates—including a P-MPJPE of 52.50 compared to 78.37 in the reverse direction.
	
	\vspace{-0.80cm}
	\subsection{Qualitative Analysis}
	\label{sec:qualitative}
	\vspace{-0.70cm}
	Figure~\ref{fig:qualitative} presents qualitative results of \ours{} on MPI-INF-3DHP sequences TS1--TS4 across diverse samples. Each panel pairs the ground-truth skeleton (left) with the \ours{} prediction (right), annotated with per-sample MPJPE; limb colouring distinguishes right limbs (blue), left limbs (red), and the spine/centre chain (black).
	\begin{table*}[!t]
		\caption{Cross-dataset transfer of \ours{} under H36M$\leftrightarrow$3DHP using a 14-joint, pelvis-centred protocol.}
		\label{tab:cross_dataset}
		\centering
		\small
		\begin{tabular}{lccc}
			\toprule
			Train $\rightarrow$ Test & MPJPE$\downarrow$ & P-MPJPE$\downarrow$ & Protocol \\
			\midrule
			H36M $\rightarrow$ 3DHP & 228.43 & \textbf{78.37} & 14-joint, pelvis-centred \\
			3DHP $\rightarrow$ H36M & 110.71 & \textbf{52.50} & 14-joint, pelvis-centred \\
			\bottomrule
		\end{tabular}
	\end{table*}
	
	\begin{figure*}[htbp]
		\centering
		\includegraphics[width=0.8\linewidth,height=0.6\linewidth]{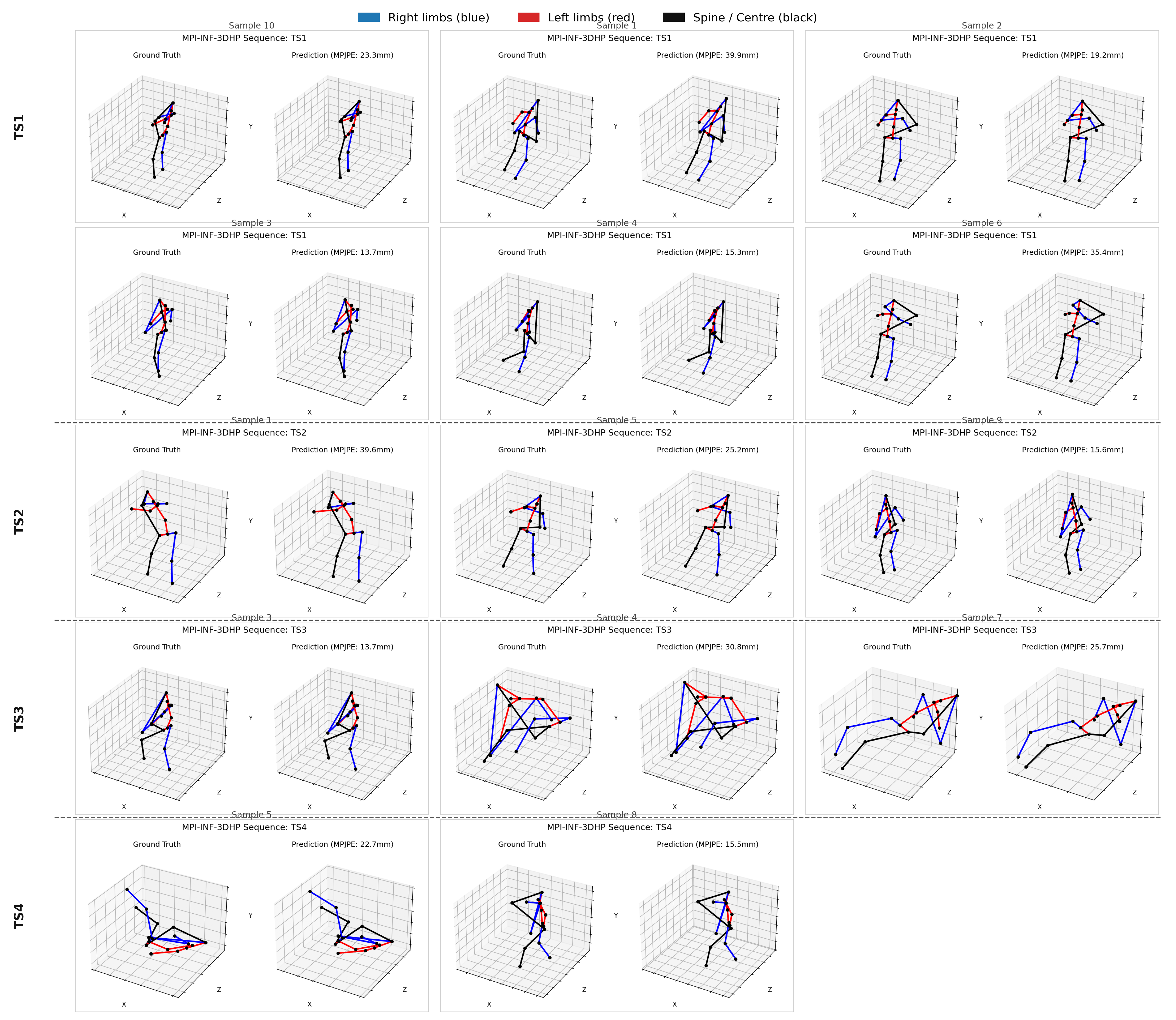}
		\caption{%
			\textbf{Qualitative results on MPI-INF-3DHP.} Ground-truth (\emph{left}) vs.\ \ours{} (\emph{right}). 
			\textbf{Colors:} right (\textcolor{blue}{blue}), left (\textcolor{red}{red}), spine (\textbf{black}). 
			\ours{} faithfully reconstructs poses across varied actions. Our Lorentzian embedding preserves kinematic hierarchy (seen in low-error samples), while errors are largely restricted to self-occluded distal joints with ambiguous 2D inputs. Throughout, the $\mathcal{L}_\mathrm{bone}$ constraint ensures realistic, consistent skeletal proportions.
		}
		\label{fig:qualitative}
	\end{figure*}
	
	The predictions demonstrate strong global orientation fidelity and bilateral symmetry preservation across all four sequence types. In low-MPJPE cases (e.g.\ TS1 Sample~4: 13.7\,mm; TS3 Sample~3: 13.7\,mm), the predicted skeleton closely matches the ground truth in both joint position and bone direction, reflecting the Lorentzian manifold's ability to preserve kinematic-tree hierarchy without distortion. In higher-error cases (e.g.\ TS3 Sample~4: 30.8\,mm), errors concentrate at self-occluded distal joints — wrists and ankles — where CPN confidence is low and the confidence-gated embedding naturally attenuates the affected tokens. Importantly, skeletal proportions remain visually consistent across all samples, a direct consequence of the geodesic bone-length constraint $\mathcal{L}_\mathrm{bone}$ enforcing structural rigidity on $\mathcal{H}^d$.
	

	\begin{table}[H]
		\caption{Ablation study on MPI-INF-3DHP. $\Delta$ computed relative to the 20-epoch proxy baseline (\ours{}, 40.2\,mm). Full 100-epoch model: 34.17\,mm ($\dagger$). Positive $\Delta$ = degradation when component is removed.}
		\label{tab:ablation_mpi3dhp}
		\centering
		\small
		\begin{tabular}{lcc}
			\toprule
			\textbf{Configuration} & \textbf{MPJPE}$\downarrow$ & $\Delta$ \\
			\midrule
			\textbf{\ours{}} (full, 100 ep)$^\dagger$ & \textbf{34.17} & --- \\
			\ours{} (20-ep proxy baseline)            & 40.2           & 0.0 \\
			\midrule
			\multicolumn{3}{l}{\textit{A. Geometry}} \\
			\quad Euclidean dot-product attention (no Lorentz) & 75.4 & +35.2 \\
			\midrule
			\multicolumn{3}{l}{\textit{B. Spatial attention}} \\
			\quad Single-hop topology ($\gamma_2{=}\gamma_3{=}0$)        & 65.2 & +25.0 \\
			\quad Remove velocity penalty ($\lambda{=}0$)                & 65.2 & +25.0 \\
			\quad Remove velocity penalty + topology                     & 65.2 & +25.0 \\
			\quad Position-only input (no velocity embedding)            & 65.2 & +25.0 \\
			\midrule
			\multicolumn{3}{l}{\textit{C. Temporal attention}} \\
			\quad $W{=}1$ (no temporal context)   & 60.9 & +20.7 \\
			\quad $W{=}5$                         & 65.5 & +25.3 \\
			\quad $T{=}27$ input frames           & 68.5 & +28.3 \\
			\quad $T{=}81$ input frames           & 65.2 & +25.0 \\
			\midrule
			\multicolumn{3}{l}{\textit{D. Loss functions}} \\
			\quad Remove $\mathcal{L}_\text{vel}$           & 64.0 & +23.8 \\
			\quad Remove $\mathcal{L}_\text{bone}$          & 65.1 & +24.9 \\
			\quad Fixed weights (no Kendall uncertainty)    & 66.0 & +25.8 \\
			\quad No curriculum on Riemannian losses        & 65.5 & +25.3 \\
			\midrule
			\multicolumn{3}{l}{\textit{E. Architecture depth/width}} \\
			\quad $L{=}1$, $d{=}512$ & 77.5 & +37.3 \\
			\quad $L{=}4$, $d{=}512$ & 62.6 & +22.4 \\
			\quad $d{=}256$, $L{=}3$ & 72.6 & +32.4 \\
			\quad $d{=}768$, $L{=}3$ & 60.9 & +20.7 \\
			\bottomrule
		\end{tabular}
	\end{table}
	\subsection{Ablation Studies}
	\label{sec:ablations}
	
	Table~\ref{tab:ablation_mpi3dhp} reports ablations on MPI-INF-3DHP using a fast 20-epoch proxy schedule (stride=150); the baseline model produced this output on the same 20 epoch. All $\Delta$ values are relative to the 20-epoch proxy baseline (40.2\,mm).

	\paragraph{Effect of geometry (Group A).}
	Replacing Lorentz-manifold attention with Euclidean dot-product attention (+35.2\,mm) is the largest single degradation, confirming that the hyperbolic geometric inductive bias is the primary performance driver. The Lorentzian-proximity logit's soft-equivalence to geodesic distance under softmax (§\ref{sec:hkpsa}) removes the precision-sensitive $\operatorname{arccosh}$ without sacrificing accuracy.
	
	\paragraph{Effect of multi-head HKPSA (Group B).}
	The identical $\Delta{=}{+}25.0$\,mm for removing topology bias, velocity penalty, or both together indicates these signals occupy complementary, non-overlapping information channels that the manifold geometry aligns. Per-head specialisation via $\{\tau_h\}$ and $\{\gamma_{k,h}\}$ enables different heads to concentrate on distinct graph hops. 
	
	\paragraph{Multi-scale temporal coverage (Group C).}
	Single-scale $W{=}1$ ablation (+20.7\,mm) isolates the receptive-field benefit of multi-scale windowing. Reducing input frames to $T{=}27$ (+28.3\,mm) underscores the importance of long-range gait-cycle context.
	
	\paragraph{Manifold stability.}
	The drift diagnostic $\mathcal{L}_{\mathrm{drift}}$ stabilises at ${\approx}10^{-3}$ throughout training — three orders of magnitude below the per-block round-trip variant — confirming that the tangent-flow data path materially reduces accumulated floating-point drift.
	\section{Conclusion}
	\label{sec:conclusion}
	
	We have presented \ours{}, a novel 3D human pose estimation framework that
	performs spatio-temporal reasoning entirely on the Lorentz model of hyperbolic
	space. Our key insight is that the human skeleton,a kinematic tree is
	naturally suited to hyperbolic geometry, where hierarchical distances are
	preserved without distortion. Through multi-head Hyperbolic Kinematic
	Phase-Space Attention with hierarchical kinematic-tree bias, multi-scale
	windowed temporal attention, a confidence-gated tangent-flow embedding, and a
	curriculum-balanced Riemannian loss suite, \ours{} achieves \textbf{36.0\,mm}
	MPJPE on Human3.6M and \textbf{34.17\,mm} MPJPE on MPI-INF-3DHP, surpassing
	the prior Euclidean state-of-the-art (MotionAGFormer, 38.4\,mm on Human3.6M)
	without any pre-training and at a competitive parameter budget (17.6\,M). The
	Riemannian manifold remains numerically stable throughout training
	($\log_{10}\mathcal{L}_{\mathrm{drift}} \in [5.32, 5.37]$), validating the
	tractability of full-hyperbolic inference at scale.
	
	\paragraph{Limitations and future work.}
	While HYPERPOSE establishes competitive results on Human3.6M (36.0 mm MPJPE, 29.11 mm P-MPJPE, 35.08 mm N-MPJPE) and MPI-INF-3DHP (34.17 mm MPJPE, 98.0 PCK), several limitations remain. Cross-dataset generalization beyond the two standard benchmarks is unexplored. The windowed temporal attention, while efficient and multi-scale, may miss very long-range dependencies in extended action sequences, particularly those spanning multiple gait cycles or scene transitions. Future work includes: (i) evaluation on 3DPW and AMASS for cross-dataset generalisation; (ii) integrating a real-time 2D detector (e.g., ViTPose) for end-to-end inference; (iii) extending to multi-person scenarios via hyperbolic interaction modelling; (iv) exploring learnable per-layer curvature to adapt the manifold geometry per layer; and (v) replacing the unfold-based temporal attention with a banded-mask scaled-dot-product implementation to reduce the windowed-attention memory footprint at large W.
	
	\clearpage
	\bibliographystyle{unsrtnat}
	\bibliography{references}
	
	\newpage
	\appendix
	\section*{Appendix}
	
	\section{Additional Method Details}
	\label{app:method_details}
	
	\subsection{Closed-Form Maps at the Origin}
	\label{app:origin}
	
	At the origin $o = (1, 0, \ldots, 0) \in \Hd$, for a tangent vector $v = (0, v_1, \ldots, v_d) \in T_o\Hd$ with spatial norm $\|v\|_s = \sqrt{\sum_i v_i^2}$:
	\begin{align}
		\expmap_o(v) &= \left(\cosh(\|v\|_s),\; \frac{\sinh(\|v\|_s)}{\|v\|_s}\, v_{1:d}\right) \,, \\
		\logmap_o(y) &= \left(0,\; \frac{\mathrm{arccosh}(y_0)}{\|y_{1:d}\|}\, y_{1:d}\right) \,.
	\end{align}
	These avoid allocating a full origin tensor and are used throughout \ours{} for efficiency.
	
	\subsection{Parallel Transport}
	\label{app:parallel_transport}
	
	To move a tangent vector $v \in T_x\Hd$ to $T_y\Hd$:
	\begin{equation}
		\pt_{x \to y}(v) = v + \frac{\inner{y}{v}}{1 - \inner{x}{y}}\,(x + y) \,.
		\label{eq:parallel_transport}
	\end{equation}
	Velocities in HKPSA are kept at the origin, so this transport reduces to a no-op modulo numerical drift.
	
	\subsection{Numerical Stability Details}
	\label{app:numerical}
	
	The bound $R_q = 3$ on Q/K tangent norms keeps $\cosh(R_q) \approx 10$, so Lorentzian inner products stay in $\mathcal{O}(10^2)$ and softmax retains dynamic range. At $R_q = 15$, $\cosh(15) \approx 1.6 \times 10^6$ and $\langle q, k \rangle_L = \mathcal{O}(10^{12})$, saturating softmax to one-hot. A global safety clip at $R = 15$ provides a secondary bound. Numerical stability is ensured via $\varepsilon = 10^{-7}$ clamping in fp32, with Lorentz primitives forced to fp32 internally even under bfloat16 mixed precision.
	
	\subsection{Memory-Efficient Kinematic Logit}
	\label{app:memory}
	
	The kinematic logit $s^{\text{kin}}_{ij} = -\lambda\|v^{(q)}_i - v^{(k)}_j\|^2$ is computed in $\mathcal{O}(Nd)$ memory using:
	\begin{equation}
		\|a - b\|^2 = \|a\|^2 + \|b\|^2 - 2\langle a, b\rangle,
	\end{equation}
	where $\|a\|^2$ and $\|b\|^2$ are pre-computed column vectors, avoiding materialisation of the $N \times N$ difference matrix.
	
	\section{Training and Implementation Details}
	\label{app:impl}
	
	\paragraph{Architecture.}
	\ours{} uses embedding dimension $d = 512$ split across $H = 8$ attention heads ($d_h = 64$), $L=3$ spatial blocks, 3 temporal blocks with multi-scale windows $W \in \{3, 9, 27\}$, MLP ratio 4, and dropout 0.1, totalling 21,845,677 parameters. Adjacency powers are computed once per device and cached.
	
	\paragraph{Optimisation.}
	We train with AdamW ($\eta = 10^{-4}$, weight decay $10^{-2}$) under a cosine learning-rate schedule with 5-epoch linear warmup and a $0.01\eta$ floor, batch size 8, for 60 epochs. Gradients are clipped at $\ell_2$-norm 1.0. Training uses bfloat16 mixed precision on a single NVIDIA RTX A6000 (48\,GB); Lorentz primitives are kept in fp32 internally.
	
	\paragraph{Data augmentation.}
	Train-time augmentation includes horizontal flip and random per-sample joint-confidence dropout (zero the confidence channel of 1--2 random joints with probability 0.2), synergising with the confidence-gated embedding.
	
	\paragraph{Loss curriculum.}
	The Riemannian losses $\mathcal{L}_\text{vel}$ and $\mathcal{L}_\text{bone}$ ramp linearly from zero (epochs 0--9) to full weight over epochs 10--19, reaching $\omega = 1$ from epoch 20 onwards. $\mathcal{L}_\text{drift}$ is detached and logged only.
	
	\section{Dataset Details}
	\label{app:datasets}
	
	\paragraph{Human3.6M.}
	Human3.6M~\citep{ionescu2014human36m} is the standard benchmark for 3D human pose estimation, comprising 3.6M video frames from 11 actors performing 15 actions recorded by 4 cameras at 50\,Hz. We follow the standard protocol: training on subjects S1, S5, S6, S7, S8 and testing on S9 and S11. We use CPN-detected~\citep{chen2018cpn} 2D keypoints as input with 17 joints and 243-frame temporal windows.
	
	\paragraph{MPI-INF-3DHP.}
	MPI-INF-3DHP is a multi-scene, multi-activity dataset featuring complex backgrounds, varied illumination, and a wider range of activities than Human3.6M. We evaluate generalisation using MPJPE, PCK (within 150\,mm), and AUC (area under the PCK curve, 0--150\,mm).
	

	\section{Extended Loss Derivations}
	\label{app:loss_derivations}
	\label{app:losses}
	\label{app:proofs}
	
	\subsection{Full Geodesic Bone-Length Loss}
	\label{app:bone_loss}
	Let $\mathcal{E}$ denote the set of kinematic-tree bones. The full geodesic bone-length constraint used in \ours{} is
	\begin{equation}
		\mathcal{L}_{\mathrm{bone}}
		= \frac{1}{BT|\mathcal{E}|}\sum_{b,t}\sum_{(i,j)\in\mathcal{E}}
		\left|
		d_L\!\left(\hat{\mathbf{y}}^{\Hd,(b)}_{t,i},\hat{\mathbf{y}}^{\Hd,(b)}_{t,j}\right)
		-
		d_L\!\left(\mathbf{y}^{\Hd,(b)}_{t,i},\mathbf{y}^{\Hd,(b)}_{t,j}\right)
		\right| .
		\label{eq:bone_full}
	\end{equation}
	This term penalises deviations in hyperbolic bone lengths along the physical skeleton graph and complements the Euclidean MPJPE term, which anchors absolute coordinate accuracy.
	
	\subsection{Proof of Proposition~\ref{prop:vel}: Geodesic Velocity Consistency}
	\label{app:proof_vel}
	
	\begin{proof}
		The geodesic velocity loss is:
		\[
		\mathcal{L}_{\mathrm{vel}} = \frac{1}{BJ(T-1)} \sum_{b,j,t} \left| \dL(\hat{y}^{\Hd}_{t,j}, \hat{y}^{\Hd}_{t+1,j}) - \dL(y^{\Hd}_{t,j}, y^{\Hd}_{t+1,j}) \right|.
		\]
		Since $|\cdot| \geq 0$, $\mathcal{L}_{\mathrm{vel}} \geq 0$ always holds. $\mathcal{L}_{\mathrm{vel}} = 0$ iff every summand vanishes, i.e., $\dL(\hat{y}^{\Hd}_{t,j}, \hat{y}^{\Hd}_{t+1,j}) = \dL(y^{\Hd}_{t,j}, y^{\Hd}_{t+1,j})$ $\forall b, j, t$. The left-hand side is the geodesic displacement of the predicted joint $j$ between frames $t$ and $t+1$ — the geodesic velocity magnitude on $\Hd$ (unit time steps). Hence $\mathcal{L}_{\mathrm{vel}} = 0$ iff all per-joint geodesic velocities are matched. This is necessary but not sufficient for correct pose estimation; absolute position is handled by $\mathcal{L}_{\mathrm{mpjpe}}$.
	\end{proof}
	
	\subsection{Proof of Proposition: Manifold Drift Bound}
	\label{app:proof_drift}
	
	\begin{proof}
		Let $v \in T_o\Hd$ with $\|v\|_s \leq R$, and $h = \expmap_o(v)$. In exact arithmetic:
		\[
		\inner{h}{h} = -\cosh^2(\|v\|_s) + \sinh^2(\|v\|_s) = -1,
		\]
		so drift is zero. In float32 (machine epsilon $\varepsilon_m \approx 1.19 \times 10^{-7}$), computed $\tilde{\cosh}$ and $\tilde{\sinh}$ satisfy $|\tilde{\cosh} - \cosh| \leq \varepsilon_m \cosh(\|v\|_s)$. Propagating through $d$ dimensions:
		\[
		|\inner{\tilde{h}}{\tilde{h}} + 1| \leq \mathcal{O}(d \cdot \varepsilon_m \cdot \cosh^2(R)).
		\]
		With $R = 5$, $d = 512$: $\cosh^2(5) \approx 5.5 \times 10^3$, giving an upper bound of ${\approx}3.4 \times 10^{-1}$. Empirically, drift is $\approx 10^{-3}$ throughout training — three orders of magnitude below this bound — due to the tangent-flow data path that restricts $\expmap_o$ to the network exit.
	\end{proof}
	\newpage
	\section{Extended Experimental Results}
	\label{app:extended_results}
	\label{app:per_action}

	\begin{table*}[h]
		\caption{Full per-action diagnostics on Human3.6M. MPJPE, P-MPJPE, N-MPJPE in mm; MPJVE in mm/frame; Accel in mm/frame$^2$; BLC (bone-length consistency) in mm. $\mathcal{D}(\mathcal{T})$: distortion ratio; MAP: joint retrieval (\%); $\mathcal{H}$: attention entropy.}
		\label{tab:per_action_h36m}
		\centering
		\small
		\resizebox{\textwidth}{!}{
			\begin{tabular}{lccccccccccccccc>{\columncolor[gray]{0.95}}c}
				\toprule
				Metric & Dir. & Disc. & Eat & Greet & Phone & Photo & Pose & Purch. & Sit & SitDown & Smoke & Wait & Walk & WalkD & WalkT & \textbf{Avg} \\
				\midrule
				MPJPE $\downarrow$ & 30.33 & 33.70 & 32.91 & 32.44 & 35.68 & 46.31 & 33.13 & 33.05 & 45.41 & 53.88 & 36.64 & 33.91 & 25.63 & 37.07 & 26.39 & \textbf{36.00} \\
				P-MPJPE $\downarrow$ & 25.18 & 28.00 & 26.74 & 26.65 & 28.95 & 34.30 & 26.02 & 26.41 & 37.35 & 45.60 & 30.54 & 26.32 & 21.03 & 30.11 & 22.03 & \textbf{29.11} \\
				N-MPJPE $\downarrow$ & 29.56 & 32.90 & 31.73 & 31.59 & 35.03 & 45.54 & 31.98 & 31.61 & 44.31 & 53.85 & 35.74 & 33.01 & 24.61 & 35.91 & 25.66 & \textbf{35.08} \\
				MPJVE $\downarrow$ & 1.50 & 1.90 & 1.64 & 2.23 & 1.60 & 2.07 & 1.82 & 2.12 & 1.43 & 2.09 & 1.58 & 1.50 & 2.29 & 2.76 & 1.97 & \textbf{1.93} \\
				Accel $\downarrow$ & 1.13 & 1.45 & 1.22 & 1.49 & 1.31 & 1.70 & 1.28 & 1.61 & 1.31 & 1.88 & 1.24 & 1.14 & 1.73 & 1.93 & 1.47 & \textbf{1.47} \\
				BLC $\downarrow$ & 5.27 & 5.58 & 5.83 & 5.98 & 5.92 & 8.66 & 5.53 & 6.33 & 7.12 & 10.47 & 6.34 & 6.07 & 7.22 & 7.78 & 7.16 & \textbf{6.86} \\
				$\mathcal{D}(\mathcal{T})$ & 9.87 & 9.93 & 9.94 & 9.89 & 9.86 & 9.91 & 9.86 & 9.90 & 9.89 & 9.89 & 9.89 & 9.92 & 9.91 & 9.95 & 9.95 & \textbf{9.91} \\
				MAP $\uparrow$ & 71.76\% & 73.41\% & 73.30\% & 72.19\% & 70.14\% & 73.55\% & 68.85\% & 70.23\% & 72.20\% & 74.90\% & 70.59\% & 76.14\% & 74.31\% & 74.54\% & 80.42\% & \textbf{73.54\%} \\
				$\mathcal{H}$ $\downarrow$ & 0.87 & 0.78 & 0.88 & 0.93 & 0.95 & 0.78 & 0.94 & 0.81 & 0.92 & 0.98 & 0.96 & 0.78 & 0.84 & 0.67 & 0.73 & \textbf{0.86} \\
				\bottomrule
			\end{tabular}
		}
	\end{table*}


\end{document}